\newcommand{\ip}[2]{\left\langle#1,#2\right\rangle}
\newcommand{\abs}[1]{\left|#1\right|}
\newcommand{\norm}[1]{\left\|#1\right\|}
\def\cC{\mathcal{C}}
\def\CC{\mathbb{C}}
\def\NN{\mathbb{N}}
\def\RR{\mathbb{R}}
\def\e{\epsilon}
\def\l{\lambda}
\def\s{\sigma}
\def\DD{\boldsymbol{\Delta}}
\def\cF{\mathcal{F}}
\newtheorem{theorem}{Theorem}
\newtheorem{definition}[theorem]{Definition}
\newtheorem{lemma}[theorem]{Lemma}
\begin{document}

\title{On the Transferability of Spectral Graph Filters}

\author{Ron Levie\\
        \texttt{\small levie@math.tu-berlin.de}
				\and
        Elvin Isufi\\
				\texttt{\small E.Isufi-1@tudelft.nl}
				\and
        Gitta Kutyniok\\
			 \texttt{\small kutyniok@math.tu-berlin.de}
			}
			
			\date{}

\maketitle

\begin{abstract}
This paper focuses on spectral filters on graphs, namely filters defined as elementwise multiplication in the frequency domain of a graph. In many graph signal processing settings, it is important to transfer a filter from one graph to another. One example is in graph convolutional neural networks (ConvNets), where the dataset consists of signals defined on many different graphs, and the learned filters should generalize to signals on new graphs, not present in the training set. A necessary condition for transferability (the ability to transfer filters) is stability. Namely, given a graph filter, if we add a small perturbation to the graph, then the filter on the perturbed graph is a small perturbation of the original filter. It is a common misconception that spectral filters are not stable, and this paper aims at debunking this mistake. We introduce  a space of filters, called the Cayley smoothness space, that contains the filters of state-of-the-art spectral filtering methods, and whose filters can approximate any generic spectral filter. For filters in this space, the perturbation in the filter is bounded by a constant times the perturbation in the graph, and filters in the Cayley smoothness space are thus termed linearly stable. By combining stability with the known property of equivariance, we prove that graph spectral filters are transferable.
\end{abstract}

\section{Introduction}

The success of convolutional neural networks (ConvNets) on Euclidean domains ignited an interest in recent years in extending these methods to graph structured data. A graph ConvNet is a mapping, that receives a signal defined over the vertices of a graph, and returns a value in some output space. A graph ConvNet consists of many layers of computations, where each layer computes a set of filters of the output of the previous layer, followed by a pointwise nonlinearity, and optionally a pooling step and non-convolution layers. 
 In a machine learning setting, the general architecture of the ConvNet is fixed, but the specific filters to use in each layer are free parameters. In training, the filter coefficients are optimized to minimize some loss function. In some situations, both the graph and the signal defined on the graph are variables in the input space of the ConvNet. In these situations, if two graphs represent the same underlying phenomenon, and the two signals given on the two graphs are similar in some sense, the output of the ConvNet on both signals should be similar as well. This property is typically termed transferability, and is an essential requirement if we wish the ConvNet to generalize well on the test set.
Analyzing and proving transferability is the subject of this paper.

A necessary condition of any reasonable definition of transferability is stability. Namely, given a filter, if the topology of a graph is perturbed, then the filter on the perturbed graph is close to the filter on the un-perturbed graph. Without stability it is not even possible to transfer a filter from a graph to another very close graph, and thus stability is necessary for transferability. 
Previous work studied the behavior of graph filters with respect variations in the graph. \cite{segarra2017optimal} provided numerical results on the robustness of polynomial graph filters to additive Gaussian perturbations of the eigenvectors of the graph Laplacian. Since the eigendecomposition is not stable to perturbations in the topology of the graph, this result does not prove robustness to such perturbations. \cite{isufi2017filtering}  showed that the expected graph filter under random edge losses is equal to the accurate output. However, \cite{isufi2017filtering} did not bound the error in the output in terms of the error in the graph topology. In this paper we show the linear stability of graph filters to general perturbations in the topology.

There are generally two approaches to defining convolution on graphs, both generalizing the standard convolution on Euclidean domains \cite{review_new,wu2019comprehensive}. Spatial approaches generalize the idea of a sliding window to graphs. Here, the main challenge is to define a way to translate a filter kernel along the vertices of the graph. Some popular examples of spatial methods are \cite{GNN_1,GNN_2,Monti2017GeometricDL}. Spectral methods are inspired by the convolution theorem in Euclidean domains, that states that convolution in the spatial domain is equivalent to pointwise multiplication in the frequency domain. The challenge here is to define the frequency domain and the Fourier transform of graphs. The basic idea is to define the graph Laplacian, or some other graph operator that we interpreted as a shift operator, and to use its eigenvalues as frequencies and its eigenvectors as the corresponding pure harmonics \cite{ortega2018graph}. Decomposing a graph signal to its pure harmonic coefficients is by definition the graph Fourier transform, and filters are defined by multiplying the different frequency components by different values. For some examples of spectral methods see, e.g., 
\cite{bruna2013spectral,defferrard2016convolutional,Cayley1,gama2018convolutional}.
Additional references for both methods can be found in \cite{wu2019comprehensive}.

The great majority of researchers from the graph ConvNet community currently focus on developing spatial methods. One typical motivation for favoring spatial methods is the claim that spectral methods are not transferable, and thus do not generalize well on graphs unseen in the training set. The goal in this paper is to debunk this misconception, and to show that state-of-the-art spectral graph filtering methods are transferable.
This paper does not argue against spatial methods, but shows the potential of spectral approaches to cope with datasets having varying graphs. We would like to encourage researches to reconsider spectral methods in such situations. 
Interestingly, \cite{Renjie_near_state} obtained near state-of-the-art results in face recognition using spectral filters on variable graphs, without any modification to compensate for the ``non-transferability''.

\section{Preliminaries}
\subsection{Spectral graph filters}

Consider an undirected weighted graph ${\cal G}=\{E,V,{\bf W}\}$, with vertices $V=\{1,\ldots,N\}$, edges $E$, and adjacency matrix ${\bf W}$. The adjacency matrix ${\bf W}$ is symmetric and represents the weights of the edges, where $w_{n,m}$ is nonzero only if vertex $n$ is connected to vertex $m$. Consider the degree matrix ${\bf D}$, defined as the diagonal matrix with entries $d_{n,n}=\sum_{m=1}^N w_{n,m}$.

The frequency domain of a graph is determined by choosing a shift operator, namely a self-adjoint operator $\DD$ that respects the connectivity of the graph. 
As a prototypical example, we consider the unnormalized Laplacian $\DD ={\bf D}-{\bf W}$, which depends linearly on ${\bf W}$.
Other examples of common shift operators are the normalized Laplacian $\DD_{\rm n}={\bf I}-{\bf D}^{-1/2}{\bf W}{\bf D}^{-1/2}$, and the adjacency matrix itself. In this paper we call a generic self-adjoint shift operator \emph{Laplacian}, and denote it by $\DD$.
Denote the eigenvalues of $\DD$ by $\{\l_n\}_{n=1}^N$, and the eigenvectors by $\{\phi_n:V\rightarrow\CC\}_{n=1}^N$. The Fourier transform of a graph signal $f:V\rightarrow\CC$ is given by the vector of frequency intensities
\[\cF f = (\ip{f}{\phi_n})_{n=1}^N,\]
where $\ip{u}{v}$ is an inner product in $\CC^N$, e.g., the standard dot product.
The inverse Fourier transform of the vector $(v_n)_{n=1}^N$ is given by
\[\cF^* (v_n)_{n=1}^N = \sum_{n=1}^Nv_n\phi_n.\]
Since $\{\phi_n\}_{n=1}^N$ is an orthonormal basis, $\cF^*$ is the inverse of $\cF$. 
A spectral graph filter ${\bf G}$ based on the coefficients $(g_n)_{n=1}^N$ is defined by
\begin{equation}
{\bf G} f = \sum_{n=1}^Ng_n\ip{f}{\phi_n}\phi_n.
\label{filt0}
\end{equation}
Any spectral filter defined by (\ref{filt0}) is \emph{equivariant}, namely, does not depend on the indexing of the vertices. Re-indexing the vertices in the input, results in the same re-indexing of vertices in the output. 

Spectral filters defined by (\ref{filt0}) have two disadvantages. First, as shown in Subsection \ref{The misconception of non-transferability of spectral graph filters}, they are not transferable. Second, they entail high computational complexity. Formula (\ref{filt0}) requires the computation of the eigendecomposition of the Laplacin $\DD$, which is computationally demanding and can be unstable when the number of vertices $N$ is large. Moreover, there is no general ``graph FFT'' algorithm for computing the Fourier transform of a signal $f\in L^2(V)$, and (\ref{filt0}) requires computing the frequency components $\ip{f}{\phi_n}$ and their summation directly. 

To overcome these two limitations, state-of-the-art methods, like \cite{defferrard2016convolutional,art:ARMA,Cayley1,gama2018convolutional},  are based on \emph{functional calculus}. Functional calculus is the theory of applying functions $g:\CC\rightarrow\CC$ on normal operators in Hilbert spaces. In the special case of a self-adjoint or unitary operator $\mathbf{T}$ with a discrete spectrum, $g(\mathbf{T})$ is defined by
\begin{equation}
g(\mathbf{T})f=\sum_{n} g(\l_n)\ip{f}{\phi_n}\phi_n,
\label{eq:FC1}
\end{equation}
for any vector $f$ in the Hilbert space, where $\{\l_n,\phi_n\}$ is the eigendecomposition of the operator $\mathbf{T}$. The operator $g(\mathbf{T})$ is normal for general $g:\CC\rightarrow\CC$, self-adjoint for $g:\CC\rightarrow\RR$, and unitary for $g:\CC\rightarrow e^{i\RR}$ (where $e^{i\RR}$ is the unit complex circle). 

Definition (\ref{eq:FC1}) is canonical in the following sense. In the special case where 
\[g(\l)=\frac{\sum_{l=0}^{L}c_l\l^l}{\sum_{l=0}^{L}d_l\l^l}\]
is a rational function, $g(\mathbf{T})$ can be defined in two ways. First, by (\ref{eq:FC1}), and second by compositions, linear combinations, and inversions, as 
\begin{equation}
g(\mathbf{T}) = \Big(\sum_{l=0}^{L}c_l\mathbf{T}^l\Big)\Big(\sum_{l=0}^{L}d_l\mathbf{T}^l\Big)^{-1}
\label{eq:FC_ratio}
\end{equation}
It can be shown that (\ref{eq:FC1}) and (\ref{eq:FC_ratio}) are equivalent. Moreover, definition (\ref{eq:FC1}) is also canonical in regard to non-rational functions. Loosly speaking, if a rational function $q$ approximates the function $g$, then the operator $q(\mathbf{T})$ approximates the operator $g(\mathbf{T})$. 
 
Implementation (\ref{eq:FC_ratio}) overcomes the limitation of definition (\ref{filt0}), where now filters are defined via (\ref{eq:FC1}) with polynomial or rational function $g$. By relying on the spatial operations of compositions, linear combinations, and inversions, the computation of a spectral filter is carried out entirely in the spatial domain, without ever resorting to spectral computations. Thus, no eigendecomposition and Fourier transforms are ever computed. The inversions in $g(\mathbf{T})f$ involve solving systems of linear equations, which can be computed directly if $N$ is small, or by some iterative approximation method for large $N$.
Methods like \cite{defferrard2016convolutional,Kipf2017,ortega2018graph,gama2018convolutional} use polynomial filters, and \cite{art:ARMA,Cayley1,ARMA_ConvNet} use rational function filters. We term spectral methods based on functional calculus \emph{functional calculus filters}.

\subsection{The misconception of non-transferability of spectral graph filters}
\label{The misconception of non-transferability of spectral graph filters}

The non-transferability claim is formulated based on the sensitivity of the Laplacian eigendecomposition to small perturbation in $\mathbf{W}$, or equivalently in $\DD$. Namely, a small perturbation of $\DD$ can result in a large perturbation of the eigendecomposition $\{\l_n,\phi_n\}_{n=1}^N$, which results in a large change in the filter defined via (\ref{filt0}).
This argument, while true, does not prove non-transferability, since state-of-the-art spectral methods do not explicitly use the eigenvectors, and do not parametrize the filter coefficients $g_n$ via the index $n$ of the eigenvalues. Instead, state-of-the-art methods are based on functional calculus, and define the filter coefficients using a function $g:\RR\rightarrow\CC$, as $g(\l_n)$. 
The parametrization of the filter coefficients by $g$ is indifferent to the specifics of how the spectrum is indexed, and instead represents an overall response in the frequency domain, where the \emph{value} of each frequency determines its response, and not its index.
In functional calculus filters defined by (\ref{eq:FC1}), a small perturbation of $\DD$ that results in a perturbation of $\l_n$, also results in a perturbation of the coefficients $g(\l_n)$. It turns out, as we prove in this paper, that the perturbation in $g(\l_n)$ implicitly compensates for the instability of the eigendecomposition, and functional calculus spectral filters are stable.

\section{Main results}
\subsection{Transferability of functional calculus filters}

In this paper, we define transferability as the linear robustness of the filter to re-indexing of the vertices and perturbation of the topology of the graph.
Thus, to formulate transferability, we combine equivariance with stability.  Since spectral filters are known to be equivariant, transferability is equivalent to stability. Thus, our goal is to prove stability.

\subsection{Linear stability of spectral filters}

Stability is proven on a dense subspace of filters is $L^p(\RR)$, which we term the Cayley smoothness space.
The definition of the Cayley smoothness space is based on the Cayley transform $\cC:\RR\rightarrow e^{i\RR}$, defined by $\cC(x)=\frac{x-i}{x+i}$. 
\begin{definition}
\label{def:Cay1}
The \emph{Cayley smoothness space} $Cay^1(\RR)$ is the subspace of functions $g\in L^2(\RR)$ of the form $g(\l)=q\big(\mathcal{C}(\l)\big)$, where $q:e^{i\RR}\rightarrow\CC$ is in $L^2(e^{i\RR})$, and has classical Fourier coefficients $\{c_l\}_{l=1}^{\infty}$ satisfying $\norm{g}_{\cC}:=\sum_{l=1}^{\infty} l\abs{c_l}<\infty$.
\end{definition}
The mapping $g\mapsto\norm{g}_{\cC}$ is a seminorm. It is not difficult to show that $Cay^1(\RR)$ is dense in each $L^p(\RR)$ space with $1\leq p<\infty$.
Intuitively, Cayley smoothness implies decay of the filter kernel in the spatial domain, since it models smoothness in a frequency domain. This can be formulated rigorously for graph filters based on Cayley polynomials ($g(\l)=q\big(\cC(\l)\big)$ with finite expansion $\{c_l\}_{l=1}^L$) \cite[Theorem 4]{Cayley1}.

For filters in the Cayley smoothness space we can obtain a linear rate of convergence, which is our main contribution.
\begin{theorem}
\label{theo:Cayley1}
Let $\DD\in \CC^{N\times N}$ be a self-adjoint matrix that we call Laplacian. Let $\DD'=\DD+\mathbf{E}$ be self-adjoint, such that $\norm{\mathbf{E}}< 1$.
Let $g\in Cay^1(\RR)$. Then 
\begin{equation}
\begin{split}
\norm{g(\DD)-g(\DD')} \leq  &\norm{g}_{\cC} \Big((\norm{\DD}+1)\frac{\norm{\mathbf{E}}}{1-\norm{\mathbf{E}}}  +\norm{\mathbf{E}} \Big) \\
                       =    &O(\norm{\DD-\DD'}).
\end{split}
\label{eq:Cayley1}
\end{equation}
\end{theorem}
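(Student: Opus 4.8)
The plan is to transport the problem through the Cayley transform, where the Laplacian becomes a \emph{unitary} operator, and then exploit that powers of unitaries have norm one. Since $\cC(x)=\frac{x-i}{x+i}$ is a Möbius transform and the self-adjoint matrix $\DD$ has real spectrum (so $\pm i$ lie in its resolvent set), the rational functional calculus (\ref{eq:FC_ratio}) identifies $\cC(\DD)=(\DD-i\mathbf{I})(\DD+i\mathbf{I})^{-1}$, which is unitary; likewise $\cC(\DD')$ is unitary. Writing the Fourier expansion of $q$ as $q(z)=\sum_l c_l z^l$, the hypothesis $\norm{g}_{\cC}=\sum_l l\abs{c_l}<\infty$ in particular gives $\sum_l\abs{c_l}<\infty$, so the series converges absolutely and uniformly on $e^{i\RR}$; feeding the eigenvectors of $\DD$ (resp.\ $\DD'$) into (\ref{eq:FC1}) and using $g=q\circ\cC$ then yields the operator-norm-convergent expansions $g(\DD)=\sum_l c_l\,\cC(\DD)^l$ and $g(\DD')=\sum_l c_l\,\cC(\DD')^l$. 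Hence $\norm{g(\DD)-g(\DD')}\le\sum_l\abs{c_l}\,\norm{\cC(\DD)^l-\cC(\DD')^l}$.

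Next I would telescope each term over unitaries. For unitaries $U,V$ and $l\ge 1$ one has $U^l-V^l=\sum_{k=0}^{l-1}U^k(U-V)V^{l-1-k}$ (and the analogous identity with inverses if negative indices occur), so $\norm{U^l-V^l}\le\abs{l}\,\norm{U-V}$ since $\norm{U^k}=\norm{V^{l-1-k}}=1$. Taking $U=\cC(\DD)$, $V=\cC(\DD')$ and summing against $\abs{c_l}$ gives
\[
\norm{g(\DD)-g(\DD')}\ \le\ \Big(\sum_l\abs{l}\,\abs{c_l}\Big)\,\norm{\cC(\DD)-\cC(\DD')}\ =\ \norm{g}_{\cC}\,\norm{\cC(\DD)-\cC(\DD')}.
\]
So the whole theorem reduces to showing $\norm{\cC(\DD)-\cC(\DD')}=O(\norm{\mathbf{E}})$.

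For that last step I would use the resolvent identity $(\DD+i\mathbf{I})^{-1}-(\DD'+i\mathbf{I})^{-1}=(\DD+i\mathbf{I})^{-1}\mathbf{E}(\DD'+i\mathbf{I})^{-1}$. Starting from $\cC(\DD)-\cC(\DD')=(\DD-i\mathbf{I})(\DD+i\mathbf{I})^{-1}-(\DD'-i\mathbf{I})(\DD'+i\mathbf{I})^{-1}$ and adding and subtracting $(\DD-i\mathbf{I})(\DD'+i\mathbf{I})^{-1}$, a short manipulation gives
\[
\cC(\DD)-\cC(\DD')=(\DD-i\mathbf{I})(\DD+i\mathbf{I})^{-1}\mathbf{E}(\DD'+i\mathbf{I})^{-1}-\mathbf{E}(\DD'+i\mathbf{I})^{-1}.
\]
Now bound the pieces: $\norm{(\DD+i\mathbf{I})^{-1}}\le 1$ and $\norm{(\DD'+i\mathbf{I})^{-1}}\le 1$ because both operators are self-adjoint (their resolvents at $-i$ have norm $\le 1$); for the first summand instead use the Neumann-series bound $\norm{(\DD'+i\mathbf{I})^{-1}}\le\norm{(\DD+i\mathbf{I})^{-1}}\big(1-\norm{(\DD+i\mathbf{I})^{-1}\mathbf{E}}\big)^{-1}\le(1-\norm{\mathbf{E}})^{-1}$, which is where $\norm{\mathbf{E}}<1$ enters; and $\norm{\DD-i\mathbf{I}}\le\norm{\DD}+1$. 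This produces $\norm{\cC(\DD)-\cC(\DD')}\le(\norm{\DD}+1)\tfrac{\norm{\mathbf{E}}}{1-\norm{\mathbf{E}}}+\norm{\mathbf{E}}$, and multiplying by $\norm{g}_{\cC}$ is exactly (\ref{eq:Cayley1}); the final line $O(\norm{\DD-\DD'})$ is immediate since the bracket is $\le C\norm{\mathbf{E}}$ once $\norm{\mathbf{E}}$ is bounded away from $1$. (One can in fact avoid the $\norm{\DD}$ factor by writing $(\DD-i\mathbf{I})(\DD+i\mathbf{I})^{-1}=\cC(\DD)$ and using $\norm{\cC(\DD)}=1$, but the estimate above already suffices.)

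The step I expect to be the real crux is \emph{not} any individual inequality but the decision to work in the Cayley variable at all. Expanding $g$ — or even a rational approximant of $g$ — in powers of $\DD$ itself would produce telescoped factors $\norm{\DD}^{k}$ that are not summable against the $Cay^1$ seminorm, and this is essentially the mechanism behind the folklore ``instability'' of spectral filters. The point is that $\cC(\DD)$ is unitary regardless of $\norm{\DD}$, so the telescoping cost is only $\abs{l}$, and the $l\abs{c_l}$ appearing in $\norm{g}_{\cC}$ is precisely tailored to absorb it. The remaining care is bookkeeping in the final step: which occurrence of $(\DD'+i\mathbf{I})^{-1}$ to bound by $1$ via self-adjointness and which by $(1-\norm{\mathbf{E}})^{-1}$ via the Neumann series.
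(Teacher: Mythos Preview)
Your proposal is correct and follows essentially the same route as the paper: pass to the unitary $\cC(\DD)$, telescope powers of unitaries to get the factor $\sum_l l\abs{c_l}=\norm{g}_{\cC}$, and bound $\norm{\cC(\DD)-\cC(\DD')}$ by the same add--and--subtract trick combined with a Neumann-series estimate on the perturbed resolvent. The only cosmetic difference is that the paper first proves the bound for finite Cayley expansions and then extends to all of $Cay^1(\RR)$ by a three-epsilon density argument (using $\norm{g_L(\DD)-g(\DD)}=\norm{g_L-g}^{\sigma}_{\infty}\to 0$ and $\norm{g_L}_{\cC}\le\norm{g}_{\cC}$), whereas you observe directly that $\sum_l\abs{c_l}\le\sum_l l\abs{c_l}<\infty$ makes the operator series $\sum_l c_l\,\cC(\DD)^l$ absolutely convergent, so the telescoping can be summed termwise without truncation; this is a legitimate and slightly more economical shortcut.
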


\section{Examples}
\noindent
\emph{ChebNets.} Consider the normalized translated Laplacian $\DD_{\rm n}-\mathbf{I}$. In ChebNets \cite{defferrard2016convolutional}, $g$ is a polynomial, and since the spectrum of $\DD_{\rm n}-\mathbf{I}$ is in $[-1,1]$, the values of $g$ outside $[-1,1]$ do not affect the filter $g(\DD_{\rm n}-\mathbf{I})$. Thus, we may assume that $g$ is a polynomial in $[-1,1]$, and padded outside $[-1,1]$ to obtain a smooth compactly supported function. It is easy to see that such a $g$ is in $Cay^1(\RR)$. Thus, for two translated normalized Laplacians $\DD_{\rm n}-\mathbf{I}$ and $\DD_{\rm n}'-\mathbf{I}$ of two graphs, $\norm{g(\DD_{\rm n}-\mathbf{I})-g(\DD'_{\rm n-\mathbf{I}})} =O(\norm{\DD_{\rm n}-\DD'_{\rm n}})$. 
\newline
\emph{General rational functions.} The above claim is also true for general rational functions, if we assume that the spectrum of $\DD,\DD'$ is contained in some pre-defined band $[0,M]$. Thus, the polynomial filters of \cite{Kipf2017,gama2018convolutional} and the ARMA rational function filters of \cite{art:ARMA,ARMA_ConvNet} are also transferable, under the assumption of uniformly bounded Laplacians.
\newline
\emph{CayleyNets.} CayleyNets \cite{Cayley1} are always transferable, since a Cayley filter is by definition in $Cay^1(\RR)$, with finite expansion.

\begin{figure}
\begin{center}
\includegraphics[width=0.91\columnwidth]{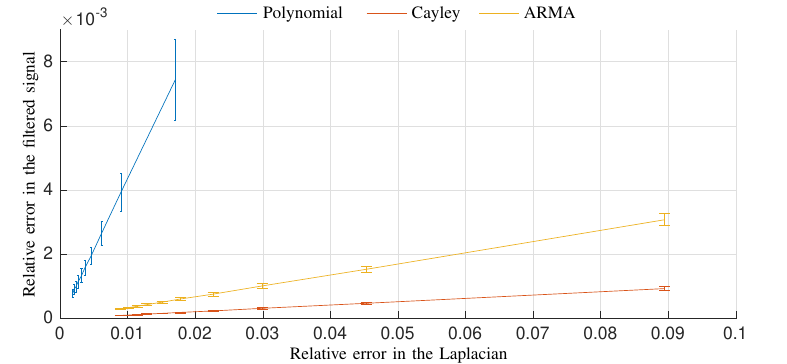}
\end{center}
\caption{Average relative error in the output of the filter as a function of the average relative error in the Laplacian, for three fixed filters: low-pass polynomial and Cayley filters of order 3, and all-pass ARMA filter of order 3.}
\label{fig:mainFig}
\end{figure}

To corroborate the proposed theoretical result, in Figure~\ref{fig:mainFig} we test the above three examples in the Molene weather dataset\footnote{Access to the raw data is possible from \url{https://donneespubliques.meteofrance.fr/donnees libres/Hackathon/RADOMEH.tar.gz}}. The graph comprises $N=32$ weather stations, with weights given as the Gaussian of the physical distances between stations.
Each of the $744$ graph signals is a temperature recording. For the polynomial filter we consider the normalized Laplacian, while for the Cayley and ARMA filters we consider the unnormalized Laplacian.  
The results are averaged over $100$ different perturbations in the topology and the $744$ graph signals. The experimental results concord with the theoretical linear stability property.

\section{Proof of Theorem \ref{theo:Cayley1}}

We start with a useful lemma.
\begin{lemma}
\label{Lamma_op_poly_con0}
Suppose $\mathbf{B},\mathbf{D},\mathbf{E}\in\CC^{N\times N}$ are self-adjoint matrices satisfying $\mathbf{B}=\mathbf{D}+\mathbf{E}$, and $\norm{\mathbf{B}},\norm{\mathbf{D}}\leq C$ for some $C>0$. 
Then for every $l\geq 0$
\begin{equation}
\norm{\mathbf{B}^l-\mathbf{D}^l} \leq l C^{l-1}\norm{\mathbf{E}}.
\label{eq:_op_poly_con0}
\end{equation}
\end{lemma}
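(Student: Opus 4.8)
The plan is to prove \eqref{eq:_op_poly_con0} by induction on $l$. The base cases $l=0$ and $l=1$ are immediate: for $l=0$ both sides are $0$, and for $l=1$ we have $\norm{\mathbf{B}-\mathbf{D}} = \norm{\mathbf{E}} = 1\cdot C^0\norm{\mathbf{E}}$. For the inductive step, assume \eqref{eq:_op_poly_con0} holds for some $l\geq 1$. The key algebraic identity is the telescoping factorization
\[
\mathbf{B}^{l+1}-\mathbf{D}^{l+1} = \mathbf{B}^{l}(\mathbf{B}-\mathbf{D}) + (\mathbf{B}^{l}-\mathbf{D}^{l})\mathbf{D},
\]
which one checks by expanding the right-hand side. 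Applying submultiplicativity of the operator norm and the triangle inequality gives
\[
\norm{\mathbf{B}^{l+1}-\mathbf{D}^{l+1}} \leq \norm{\mathbf{B}}^{l}\norm{\mathbf{E}} + \norm{\mathbf{B}^{l}-\mathbf{D}^{l}}\,\norm{\mathbf{D}}.
\]
Now I would invoke the hypotheses $\norm{\mathbf{B}},\norm{\mathbf{D}}\leq C$ together with the induction hypothesis $\norm{\mathbf{B}^l-\mathbf{D}^l}\leq lC^{l-1}\norm{\mathbf{E}}$ to bound the right-hand side by $C^l\norm{\mathbf{E}} + lC^{l-1}\norm{\mathbf{E}}\cdot C = (l+1)C^l\norm{\mathbf{E}}$, which is exactly \eqref{eq:_op_poly_con0} with $l$ replaced by $l+1$, closing the induction.

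I do not anticipate any serious obstacle here; the only thing to be slightly careful about is the choice of telescoping identity. One could alternatively write $\mathbf{B}^{l+1}-\mathbf{D}^{l+1} = (\mathbf{B}-\mathbf{D})\mathbf{B}^l + \mathbf{D}(\mathbf{B}^l-\mathbf{D}^l)$, or use the fully symmetric expansion $\mathbf{B}^{l+1}-\mathbf{D}^{l+1} = \sum_{j=0}^{l}\mathbf{B}^{j}(\mathbf{B}-\mathbf{D})\mathbf{D}^{l-j}$ and bound each of the $l+1$ terms by $C^{l}\norm{\mathbf{E}}$ directly, giving a one-line non-inductive proof. Either route works; since $\mathbf{B}$ and $\mathbf{D}$ need not commute it is important not to collapse powers prematurely, but self-adjointness is not actually needed for this particular estimate — only the norm bounds and submultiplicativity are used. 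I would present the inductive version for clarity, as it matches the way the lemma will likely be chained into a series estimate for $g(\mathbf{B})-g(\mathbf{D})$ via the Cayley/Fourier expansion in the proof of Theorem \ref{theo:Cayley1}.
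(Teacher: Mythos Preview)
Your proof is correct and follows essentially the same route as the paper: the paper writes the identity $\mathbf{D}^l-\mathbf{B}^l=\mathbf{D}^{l-1}(\mathbf{D}-\mathbf{B})+(\mathbf{D}^{l-1}-\mathbf{B}^{l-1})\mathbf{B}$ and iterates the resulting recursion, which is exactly your induction with the roles of $\mathbf{B}$ and $\mathbf{D}$ interchanged. Your observation that self-adjointness is unnecessary here is also correct.
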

\begin{proof}
Let $l\in\NN$. 
\begin{equation} \label{eeryqo1}
\mathbf{D}^l -\mathbf{B}^l =   \mathbf{D}^{l-1}\left(\mathbf{D}-\mathbf{B}\right)  +\left(\mathbf{D}^{l-1} -\mathbf{B}^{l-1} \right)\mathbf{B}
\end{equation}
so
\begin{equation} \label{eeryqo18}
\norm{\mathbf{D}^l -\mathbf{B}^l} \leq  C^{l-1}\norm{\mathbf{E}}  +\norm{\mathbf{D}^{l-1} -\mathbf{B}^{l-1} }C.
\end{equation}
Now, (\ref{eq:_op_poly_con0}) follows by repeatedly using (\ref{eeryqo18}) with decreasing powers $l-j$, $j=1,\ldots,l-1$.
\end{proof}

Next, we cite a general property from spectral theory.
\begin{lemma}
\label{lemma_functional_calculus1}
Let $T$ be a bounded normal operator in a Hilbert space. Let $\s$ be the spectrum of $T$. Define the infinity norm on the space of bounded continuous functions
$f:\s\rightarrow \CC$ by
\[\norm{f-g}^{\s}_{\infty} = \sup_{x\in\s}\abs{f(x)-g(x))}.\]
Then
\[\norm{f(T)-g(T)} = \norm{f-g}^{\s}_{\infty}\]
where the norm in the left-hand-side is the operator norm.
\end{lemma}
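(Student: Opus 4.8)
\noindent
The plan is to reduce the two-function statement to a single-function isometry and then invoke the two pillars of the continuous functional calculus for normal operators. Since the map $h\mapsto h(T)$ is linear, setting $h=f-g$ gives $h(T)=f(T)-g(T)$ and $\norm{h}^{\s}_{\infty}=\norm{f-g}^{\s}_{\infty}$, so it suffices to prove the single isometry $\norm{h(T)}=\norm{h}^{\s}_{\infty}$ for every bounded continuous $h:\s\rightarrow\CC$. I would record the two facts to be used: (i) for a bounded normal operator $N$, the operator norm equals the spectral radius, $\norm{N}=r(N):=\sup_{\mu\in\sigma(N)}\abs{\mu}$; and (ii) the spectral mapping theorem for the continuous functional calculus, $\sigma\big(h(T)\big)=h(\s)$.

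Granting (i) and (ii), the conclusion is immediate. Because $T$ is normal, $h(T)$ is again normal, being a norm-limit of polynomials in the commuting operators $T$ and $T^*$; equivalently $h(T)^*=\bar h(T)$ commutes with $h(T)$ since the calculus is a $*$-homomorphism. Hence by (i) and then (ii),
\[\norm{h(T)}=r\big(h(T)\big)=\sup_{\mu\in\sigma(h(T))}\abs{\mu}=\sup_{x\in\s}\abs{h(x)}=\norm{h}^{\s}_{\infty}.\]

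It remains to justify (i) and (ii), which are the substance of the lemma. For (i) I would first treat a self-adjoint operator $S$: the $C^*$-identity gives $\norm{S^2}=\norm{S}^2$, and iterating yields $\norm{S^{2^k}}=\norm{S}^{2^k}$, so Gelfand's spectral-radius formula $r(S)=\lim_k\norm{S^{2^k}}^{1/2^k}$ forces $r(S)=\norm{S}$. For general normal $N$, normality gives $(N^2)^*N^2=(N^*N)^2$, whence $\norm{N^2}^2=\norm{(N^*N)^2}=\norm{N^*N}^2=\norm{N}^4$, so again $\norm{N^2}=\norm{N}^2$ and the same Gelfand argument gives $\norm{N}=r(N)$. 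For (ii), the classical polynomial spectral mapping theorem handles functions $h$ of the form $p(x,\bar x)$, and the general case follows by approximation: by Stone--Weierstrass the polynomials in $x$ and $\bar x$ are uniformly dense in the continuous functions on the compact set $\s\subset\CC$, and the functional calculus is continuous (indeed isometric, once (i) is available), so spectra pass to the limit.

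The hard part will be establishing (ii) cleanly, namely showing that the norm identity $\norm{p(T)}=\norm{p}^{\s}_{\infty}$ proven for polynomials $p$ in $x$ and $\bar x$ extends by density to all continuous $h$, and that the spectrum behaves well under this limit. Everything hinges on the interplay between the $C^*$-identity (which yields the norm equality (i)) and Stone--Weierstrass (which supplies the approximation): together they exhibit $h\mapsto h(T)$ as an isometric $*$-homomorphism from the continuous functions on $\s$ onto the $C^*$-algebra generated by $T$, and the asserted identity is precisely the isometry of this homomorphism applied to $h=f-g$.
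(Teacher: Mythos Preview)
Your argument is correct and follows the standard route to the isometry of the continuous functional calculus: reduce to a single function $h=f-g$, use that normal operators have operator norm equal to spectral radius, invoke the spectral mapping theorem, and handle the passage from polynomials in $x,\bar x$ to general continuous $h$ via Stone--Weierstrass. One small remark: you briefly flirt with circularity when you say the spectral mapping theorem for general continuous $h$ follows because the calculus is ``isometric, once (i) is available''; to keep the logic clean, first establish the isometry $\norm{p(T)}=\norm{p}^{\s}_{\infty}$ for $*$-polynomials $p$ using the elementary polynomial spectral mapping theorem for normal $T$, then use that isometry to extend the calculus (and hence the norm identity) to all continuous $h$---but your final paragraph essentially says this.

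As for comparison with the paper: there is nothing to compare. The paper does not prove this lemma; it introduces it with ``we cite a general property from spectral theory'' and states it without proof. So you have supplied a proof where the authors simply appeal to a standard fact.
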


To prove Theorem \ref{theo:Cayley1}, we start with a version the theorem restricted to $g=q\circ\mathcal{C}\in Cay^1(\RR)$ where $q$ has a finite expansion with coefficients $(c_l)_{l=1}^L$.
\begin{proof}[Proof of Theorem \ref{theo:Cayley1} for finite Cayley expansions]
Note that
\[\begin{split}
\mathcal{C}(\DD)-\mathcal{C}(\DD') = &(\DD-i)(\DD+i)^{-1} -(\DD-i)(\DD'+i)^{-1} \\
 +& (\DD-i)(\DD'+i)^{-1}- (\DD'-i)(\DD'+i)^{-1}
\end{split}\]
so
\[\norm{\mathcal{C}(\DD)-\mathcal{C}(\DD')}\]
\[\leq \norm{(\DD-i)}\norm{(\DD+i)^{-1}-(\DD'+i)^{-1}} + \norm{(\DD'+i)^{-1}}\norm{\mathbf{E}}.\]
By the fact that the spectrum of $\DD'$ is real, $\norm{(\DD'+i)^{-1}}\leq 1$, and we have
\begin{equation}
\begin{split}
 & \norm{\mathcal{C}(\DD)-\mathcal{C}(\DD')}  \\
& \leq (\norm{\DD}+1)\norm{(\DD+i)^{-1}-(\DD'+i)^{-1}}  + \norm{\mathbf{E}}.
\end{split}
\label{eq:333e3ee}
\end{equation}

Let us bound $\norm{(\DD+i)^{-1}-(\DD'+i)^{-1}}$ in terms of $\norm{\mathbf{E}}$. Since $\norm{\mathbf{E}}<1$ we may expand
\[\begin{split}
(\DD+i+\mathbf{E})^{-1} = &(\DD+i)^{-1}\big(I+\mathbf{E}(\DD+i)^{-1}\big)^{-1}\\
               = &(\DD+i)^{-1}\Big(\sum_{k=0}^{\infty}(-1)^k(\mathbf{E}(\DD+i)^{-1})^k \Big)
\end{split}\]
so, by $\norm{(\DD+i)^{-1}}\leq 1$,
\begin{equation}
\norm{(\DD+i)^{-1} - (\DD'+i)^{-1}}  \leq \frac{\norm{\mathbf{E}}}{1-\norm{\mathbf{E}}}.
\label{eq:dfgdrg5lld}
\end{equation}
Now, by (\ref{eq:333e3ee}) and (\ref{eq:dfgdrg5lld}),
\begin{equation}
\norm{\mathcal{C}(\DD)-\mathcal{C}(\DD')} \leq (\norm{\DD}+1)\frac{\norm{\mathbf{E}}}{1-\norm{\mathbf{E}}} + \norm{\mathbf{E}}.
\label{eq:Cay_proof_b1}
\end{equation}
Observe that $\mathcal{C}(\DD)$ and $\mathcal{C}(\DD')$ are unitary, so their spectrum is bounded by $C=1$. Thus, by Lemma \ref{Lamma_op_poly_con0} and the triangle inequality on the polynomial expansion of $p\big(\mathcal{C}(\DD)\big)-p\big(\mathcal{C}(\DD')\big)$,
\[
\begin{split}
\norm{g(\DD)-g(\DD')}  =& \norm{p\big(\mathcal{C}(\DD)\big)-p\big(\mathcal{C}(\DD')\big)}\\
                    \leq& \sum_{l=1}^L l c_l \norm{\mathcal{C}(\DD)-\mathcal{C}(\DD')}
\end{split}
\]
which gives (\ref{eq:Cayley1}).
\end{proof}

\begin{proof}[Proof of Theorem \ref{theo:Cayley1}]
Theorem \ref{theo:Cayley1} follows the above result by a simple density argument. Given $g=q\circ\mathcal{C}\in Cay^1(\RR)$, we consider the truncations $g_L=q_L\circ\mathcal{C}$ , where $q_L$ is restricted to the coefficients $(c_l)_{l=1}^L$.
We base a three-epsilon argument on the expansion, for any $L\in\NN$, 
\begin{equation}
\begin{split}
\norm{g(\DD)-g(\DD')} \leq &\norm{g(\DD)-g_L(\DD)}\\
  &+ \norm{g_L(\DD)-g_L(\DD')} \\
	&+ \norm{g_L(\DD')-g(\DD)},
\end{split}
\label{eq:Cayley1_244}
\end{equation}
For any $\e>0$, by Lemma \ref{lemma_functional_calculus1}, the first and the last terms of the right-hand side of \ref{eq:Cayley1_244} can be made smaller than $\e/2$ by choosing $L$ large enough. Moreover, for any $L\in\NN$,
\[\norm{g_L}_{\cC}=\sum_{l=0}^{L}l\abs{c_l}\leq \sum_{l=0}^{\infty}l\abs{c_l}=\norm{g}_{\cC},\]
so, by Theorem \ref{theo:Cayley1} for finite Cayley expansions, for every $\e>0$ 
\begin{equation}
\norm{g(\DD)-g(\DD')} \leq \norm{g}_{\cC} \Big((\norm{\DD}+1)\frac{\norm{\mathbf{E}}}{1-\norm{\mathbf{E}}} +\norm{\mathbf{E}} \Big) +\e.
\label{eq:Cayley1_2rrt5}
\end{equation}
Since (\ref{eq:Cayley1_2rrt5}) is true for every $\e>0$, we must have (\ref{eq:Cayley1}).
\end{proof}

\bibliographystyle{IEEEtran}
\bibliography{IEEEabrv,IEEE_Transferability}

\end{document}